\newtheorem{Theorem}{Theorem}
\begin{document}

\title{Weighted Random Search for Hyperparameter Optimization}
\author{A.C. Florea, R. Andonie} 
\date{}
\maketitle

{\small
\begin{flushleft}
\textbf{Adrian-C\u at\u alin Florea}\\
Department of Electronics and Computers\\
Transilvania University of Bra\c sov, Romania\\
acflorea@unitbv.ro
\end{flushleft}

{\small
\begin{flushleft}
\textbf{R\u azvan Andonie}\\
Department of Computer Science\\
Central Washington University, USA\\
andonie@cwu.edu
\end{flushleft}

\begin{abstract}

We introduce an improved version of Random Search (RS), used here for hyperparameter optimization of machine learning algorithms. 
Unlike the standard RS, which generates for each trial new values for all hyperparameters, we generate new values for each hyperparameter with a  probability of change. The intuition behind our approach is that a value that already triggered a good result is a good candidate for the next step, and should be tested in new combinations of hyperparameter values. Within the same computational budget, our method yields better results than the standard RS. Our theoretical results prove this statement. We test our method on a variation of one of the most commonly used objective function for this class of problems (the Grievank function) and for the hyperparameter optimization of a deep learning  CNN architecture. Our results can be generalized to any optimization problem defined on a discrete domain.

\end{abstract}

\section{Introduction} \label{introduction}

The vast majority of machine learning algorithms involve two different sets of parameters: the training parameters and the  meta-parameters (also known as \textit{hyperparameters}). While the training parameters are learned during the training phase, the values of the hyperparameters have to be specified before the learning phase. For instance, the hyperparameters of neural networks typically specify the architecture of the network (number and type of layers, number and type of nodes, etc).

Determining the optimal combination of hyperparameter values leading to the best generalization performance can be done through repeated training and evaluation sessions, trying different combinations of hyperparameter values. We call each training + evaluation process for one combination of hyperparameter values a \textit{trial}. Each trial is computationally expensive, since it involves re-training the model. In addition, the number of trials increases generally exponential with the number of hyperparameters. Therefore, it is important to reduce the number or trials \cite{florea:hal-01821037}. This can be done by both reducing the number of hyperparameters and reducing the value range of each hyperparameter, while still maximizing the probability to hit the optimal combination \cite{conf/nips/BergstraBBK11,journals/jmlr/BergstraB12}.

Various hyperparameter optimization methods were developed during the years, ranging from very simple ones, such as Grid Search (GS) and manual tuning \cite{LeCun2012,Hinton2012,Smusz2015}\footnote{https://github.com/jaak-s/nips2014-survey - 82 out of 86 optimization related papers presented at the NIPS 2014 conference used GS.}, to highly elaborated techniques: Nelder-Mead  \cite{Albelwi2017, NelderMead65}, simulated annealing \cite{Kirkpatrick1984}, evolutionary algorithms  \cite{Hansen:2003:RTC:772374.772376}, Bayesian methods \cite{Thornton:2013:ACS:2487575.2487629}, etc.

Recently, there has been significant interest in the area of hyperparameter optimization, especially since the rise of deep learning which puts a lot of pressure on the existing techniques due to the very large number of hyperparameters involved and the significant training time needed for such architectures. The focus in hyperparameter optimizations presently oscillates between introducing more sophisticated techniques (Sequential Model-Based Global Optimization \cite{conf/nips/BergstraBBK11}, reinforcement learning \cite{DBLP:journals/corr/ZophL16,DBLP:journals/corr/ZophVSL17}, etc) and various attempts to optimize existing simple techniques. 

RS falls into the category of simple algorithms \cite{conf/nips/BergstraBBK11,journals/jmlr/BergstraB12}. Making use of the same computational budget, RS generally yields better results than GS or more complicated hyperparameter optimization methods \cite{conf/nips/BergstraBBK11}. Especially in higher dimensional spaces, the computation resources required by RS methods are significantly lower than for GS \cite{Lemley2016}. RS consists in drawing samples from the parameter space following a particular distribution for each of the parameters. Each trial is drawn and evaluated independently from the others, which makes RS a very good candidate for parallel implementations. 

Some recent attempts to optimize the RS algorithm are: Li's \textit{et al.} Hyperband \cite{DBLP:journals/corr/LiJDRT16}, which speeds up RS through adaptive resource allocation and early-stopping; Domhan \textit{et al.} \cite{Domhan:2015:SUA:2832581.2832731}, which have developed a probabilistic model to mimic early termination of sub-optimal candidate; and Florea \textit{et al.} \cite{florea:hal-01821037}, where we introduced a dynamically computed stopping criterion for RS, reducing the number of trials without reducing the generalization performance.

There are various software libraries implementing hyperparameter optimization methods. Hyperopt \cite{1749-4699-8-1-014008} and Optunity  \cite{DBLP:journals/corr/ClaesenSPMM14} are currently two of the most advanced standalone packages. Bayesian techniques are implemented by packages like BayesianOptimization \cite{NIPS2012_4522} and pyGPGO \cite{Shahriari2016TakingTH}. Some of the best known general purpose machine learning software libraries also provide hyperparameter optimization: LIBSVM \cite{CC01a} and scikit-learn \cite{scikit-learn} come with their own implementation of GS, with scikit-learn also offering support for RS. Auto-WEKA \cite{JMLR:v18:16-261}, built on top of Weka  \cite{Hall:2009:WDM:1656274.1656278} is able to perform GS, RS, and Bayesian optimization.

Lately, commercial cloud-based services started to offer hyperparameter optimization capabilities. Among them we count Google HyperTune \cite{Google-HyperTune}, BigML's OptiML \cite{BigML}, and SigOpt \cite{SIGOPT}. All of them support mixed search domains, SigOpt being able to handle multi-objective, multi-solution, constraint (linear and black-box), and parallel optimization.

In this context, our contribution is an improved version of the RS method, the \textit{Weighted Random Search} (WRS) method. Unlike the standard RS, which generates for each trial new values for all hyperparameters, we generate new values for each hyperparameter with a  probability of change $p$ and we use the best value found so far for that particular hyperparameter with probability $1-p$, where $p$ is proportional to the hyperparameter's relative importance in the variation of the objective function. The intuition behind our approach is that a value that already triggered a good result is a good candidate for a new trial and should be tested in new combinations of hyperparameter values.

For the same number of trials, the WRS algorithm produces significantly better results than RS. We obtained theoretical results which prove this statement. We  tested our algorithm on a slightly modified version of one of the most commonly used objective function for this class of problems - the Grievank \cite{Griewank81} function, as well as for the hyperparameter optimization of a deep learning CNN architecture using the CIFAR-10 \cite{cifar-10} dataset. 

Unlike our previous work on RS optimization \cite{florea:hal-01821037}, where our focus was on the dynamic reduction of the number of  trials, the focus of the WRS method is the optimization of the classification (prediction) performance  within the same computational budget. The two approaches make use of different optimization techniques.

The paper proceeds as follows. Section \ref{new_approach} is a general presentation of our WRS algorithm. Section \ref{theoretical_aspects} describes theoretical results and the convergence of WRS. Sections \ref{grievank_function_optimization} and \ref{cnn_hyperparameter_optimization} contain experimental results. The paper is concluded with Section \ref{conclusion}.

\section{The WRS Method} \label{new_approach}

We  first present the generic intuitive description of the WRS algorithm, which is the core of our contribution. Technical details will be provided later.

The standard RS technique \cite{conf/nips/BergstraBBK11} generates a new multi-dimensional sample at each step $k$, with new random values for each of the sample's dimensions - features, in our case -  $X^k = \{x_i^k\}, i=1, \dots,d$, where $x_i$ is generated according to a probability distribution $P_i(x), i=1, \dots,d$, and $d$ is the number of dimensions.

WRS is an improved version of RS, designed for hyperparameter optimization. It assigns probabilities of change $p_{i}, i=1, \dots,d$ to each dimension. For each dimension $i$, after a certain number of steps $k_i$, instead of always generating a new value, we generate it with probability $p_i$ and use the best value known so far with probability $1-p_i$. 

The intuition behind the proposed algorithm is that after already fixing $d_0$ ($1<d_0<d$) values, each $d$-dimensional optimization problem reduces itself to a  $d-d_0$ dimensional one. In the context of this $d-d_0$ dimensional problem, choosing a set of values that already performed well for the remaining dimensions might prove more fruitful than choosing some $d-d_0$ random values. In order to avoid getting stuck in a local optimum, instead of setting a hard boundary between choosing the best combination of values found so far or generating new random samples, we assign probabilities of change for each dimension of the search space.

WRS has two phases. In the first phase, it runs the RS for a predefined number of trials and allows: \textit{a)} to identify the best combination of values so far; and \textit{b)} to give enough input on the importance of each dimension in the optimization process. The second phase considers the probabilities of change and generates the candidate values according to them. Between these two phases, we run one instance of fANOVA \cite{HutHooLey14}, in order to determine the importance of each dimension with respect to the objective function. Intuitively, the most important dimension (the dimension that yields the largest variation of the objective function) is the one that should change most frequently, to cover as much of the variation range as possible. For a dimension with small variation of the objective function, it might be more efficient to keep a certain temporary optimum value once this has been identified.

A step of the WRS algorithm applied to function maximization is described by Algorithm \ref{alg:wrs_one_step}, whereas the entire method is detailed in Algorithm \ref{alg:wrs_global}. $F$  is the objective function, the value $F(X)$ has to be computed for each argument, $X^k$ is the best argument at iteration $k$, whereas $N$ is the total number of iterations. 

At each step of Algorithm \ref{alg:wrs_global}, at least one dimension will change,  hence we always choose at least one of the $p_i$ probabilities to be equal to one. For the other probabilities, any value in $(0,1]$ is valid. If all values are one, then we are in the case of RS. 

 \begin{algorithm}
 \caption{A WRS Step - Objective Function Maximization}
 \label{alg:wrs_one_step}
 \begin{algorithmic}[1]
 \renewcommand{\algorithmicrequire}{\textbf{Input:}}
 \renewcommand{\algorithmicensure}{\textbf{Output:}}
 \Require $F$;  ($X^k$, $F(X^k)$);  $p_{i},k_{i},P_{i}(x) ,i=1,\dots,d$
 \Ensure  ($X^{k+1},F(X^{k+1})$)
 \State Randomly generate  $p$, uniform in (0,1) 
 \For {$i = 1$ to $d$}
 	\If {($p_i \geq p \text{ or } k \leq k_i $)}
 	    \State \textit{// either the probability condition is met or more samples are needed}
 		\State Generate $x^{k+1}_i$ according to $P_i(x)$
 	\Else
 	 	\State $x^{k+1}_i = x^{k}_i$
	 \EndIf
 \EndFor
  \\ \textit{// usually this is the most time consuming step}
  \State Compute $F(X^{k+1})$
  \If {$F(X^{k+1}) \geq F(X^{k}$)}
  	\State
  	\Return $(X^{k+1}, F(X^{k+1}))$
  \Else
    	\State
  	\Return $(X^{k}, F(X^{k}))$
 \EndIf   
 \end{algorithmic} 
 \end{algorithm}

 \begin{algorithm}
 \caption{WRS - Objective Function Maximization}
 \label{alg:wrs_global}
 \begin{algorithmic}[1]
 \renewcommand{\algorithmicrequire}{\textbf{Input:}}
 \renewcommand{\algorithmicensure}{\textbf{Output:}}
 \Require $F$;  $N$;  $P_{i}(x),i=1,\dots,d$
 \Ensure  ($X^{N},F(X^{N})$)
 \ \\ \textit{// Phase 1 - Run RS}
 \For {$k = 1$ to $N_0<N$}
	\State Perform RS step, compute $(X^{k}, F(X^{k}))$
 \EndFor
 \ \\ \textit{// Intermediate phase, determine input for WRS}
 \State Determine the probability of change $p_{i},i=1,\dots,d$
 \State Determine the minimum number of required values $k_{i},i=1,\dots,d$
 \ \\ \textit{// Phase 2 - Run WRS}
 \For {$k = N_0+1$ to $N$}
	\State Perform WRS Step described in Algorithm \ref{alg:wrs_one_step}, compute $(X^{k}, F(X^{k}))$
 \EndFor
 \State
 \Return ($X^{N},F(X^{N})$)
 \end{algorithmic} 
 \end{algorithm}

Besides a way to compute the objective function, Algorithm \ref{alg:wrs_one_step} requires only the combination of values that yields the best $F(X)$ value obtained so far and the probability of change for each dimension. The current optimal value of the objective function can be made optional, since the comparison can be done outside of Algorithm \ref{alg:wrs_one_step}. Algorithm \ref{alg:wrs_global} coordinates the sequence of the described steps and calls Algorithm \ref{alg:wrs_one_step} in a loop, until the maximum number of trials $N$ is reached.

\section{Theoretical Aspects and Convergence} \label{theoretical_aspects}

We aim to analyze the theoretical convergence of Algorithm \ref{alg:wrs_global} and compare it to the RS method. Similar to GS and RS, we make the assumption that there is no statistical correlation between the variables of the objective function (hyperparameters). To make explanations more intuitive, we first discuss the two-dimensional case, and then generalize for the multi-dimensional case. We will also define what we consider "a set of good candidate values" for $p_{i}$ and $k_{i},i=1,\dots,d$ (used in steps 6 \& 7, Algorithm \ref{alg:wrs_global}). We denote by $n\geq1$ the number of iterations (steps) for both RS and WRS.

\subsection{Two-dimensional case} \label{bi-dimensional-case}

In the two-dimensional case  ($d=2$), we aim to maximize a function $F:{\text{S}_1}\times {\text{S}_2}\to {\mathbb {R}}$, where $\text{S}_1$ and $\text{S}_2$ are countable sets. We define as \textit{global optimum} the point $X^*(x_1^*, x_2^*)$, with $x_1^*\in\text{S}_1$ and $x_2^*\in\text{S}_2$, so that $F(X^*)\geq{F(X)}, \forall X \in\text{S}_1\times\text{S}_2$. $p_i,k_i, (i=1,2)$ are the probabilities of change, respectively, the required number of distinct values for $x_i$, as previously defined. $|S_i|$ is the cardinality of $S_i,i=1,2$. We denote by $p_{RS:n}$ and $p_{WRS:n}$ the probability for RS, respectively WRS, to reach the global optimum after $n$ steps. 

The following theorem establishes that, in the two-dimensional case we can choose $k_2$ so that 
\begin{equation}\label{eq:WRSvsRS}
p_{WRS:n}\geq{p_{RS:n}} 
\end{equation}

\begin{Theorem}
For any function $F:{\text{S}_1}\times {\text{S}_2}\to {\mathbb {R}}$ there exists $k_2$, so that $p_{WRS:n}\geq{p_{RS:n}}$.
\end{Theorem}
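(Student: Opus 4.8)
The plan is to reduce the existence claim to exhibiting a single admissible value of $k_2$ for which the inequality holds, after putting both quantities $p_{RS:n}$ and $p_{WRS:n}$ in a comparable form. Because the coordinates are sampled independently and successive RS trials are independent, I would set $q_i := P_i(x_i^*) > 0$ for the probability that a fresh draw in dimension $i$ lands on the optimal coordinate. Then a single RS step reaches $X^*$ with probability $q_1 q_2$, so that $p_{RS:n} = 1 - (1 - q_1 q_2)^n$. This closed form is the baseline against which everything is measured.

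For WRS I would follow the convention that at least one probability of change equals one by fixing $p_1 = 1$, so dimension $1$ is re-sampled at every step, while keeping $p_2 \in (0,1]$ and the burn-in length $k_2$ as the quantities to analyze. During the first $k_2$ steps the guard $k \le k_2$ in Algorithm~\ref{alg:wrs_one_step} forces a fresh draw in dimension $2$ as well, so over these steps WRS is distributed exactly like RS; after step $k_2$, dimension $2$ is re-sampled with probability $p_2$ and otherwise inherits the second coordinate of the incumbent best point $X^k$. I would then write $p_{WRS:n}(k_2)$ by conditioning on the first step at which $x_2^*$ is drawn and on whether that value is subsequently retained as the incumbent's second coordinate.

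The comparison then rests on two observations. First, a boundary fact: when $k_2 \ge n$ the guard is active throughout, WRS coincides step for step with RS, and $p_{WRS:n}(k_2) = p_{RS:n}$, which already exhibits an admissible $k_2$ realizing the inequality with equality. Second, to see that the inequality can be strict, I would compare the per-step success probability once $x_2^*$ has been locked in as the incumbent coordinate: such a step reaches $X^*$ with probability $q_1\big((1-p_2) + p_2 q_2\big)$, and since $(1-p_2)+p_2 q_2 - q_2 = (1-q_2)(1-p_2) \ge 0$, this is at least $q_1 q_2$, strictly larger when $p_2 < 1$ and $q_2 < 1$. Hence every history in which $x_2^*$ is retained contributes at least as much as its RS counterpart, and summing these contributions against the ``burn-in equals RS'' part yields $p_{WRS:n}(k_2) \ge p_{RS:n}$ for the chosen $k_2$.

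The main obstacle is the coupling introduced by the incumbent: whether the retained second coordinate equals $x_2^*$ is governed by $F$, because $x_2^*$ becomes the incumbent coordinate only when some sampled point carrying it beats every previously seen point. For a favourable $F$ a single fresh hit on $x_2^*$ locks it in, so a short burn-in already gives strict improvement; for an adversarial $F$ that rewards $x_2^*$ only in combination with $x_1^*$, the retention mechanism may instead freeze a suboptimal coordinate and temporarily hurt. Since the theorem asserts only existence, I would neutralize this by letting the admissible $k_2$ depend on $F$: in the worst case the universal witness is a $k_2$ large enough that no spurious retention of a suboptimal coordinate can occur within $n$ steps (in the extreme, $k_2 \ge n$, recovering RS and equality), while whenever retention of $x_2^*$ has positive probability a strictly smaller $k_2$ already forces $p_{WRS:n} > p_{RS:n}$. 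The delicate step to verify carefully is precisely that the per-history contributions never drop below the RS value for the witness $k_2$, i.e. that the extra fresh dimension-$2$ draws RS enjoys while WRS holds are more than compensated by the held value being $x_2^*$.
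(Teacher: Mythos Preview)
Your proposal is correct for the existence claim as stated, but it proceeds quite differently from the paper. The paper does not invoke the boundary witness $k_2\ge n$ at all; instead it writes down an explicit per-step success probability for WRS,
\[
p_{WRS}=\frac{1}{|S_1|}\Bigl(p_2\,\frac{1}{|S_2|}+(1-p_2)\,\frac{1}{|S_2|-m_2+1}\Bigr),
\]
with $m_2$ the number of distinct $x_2$-values seen so far, compares it term-by-term to $p_{RS}=\frac{1}{|S_1||S_2|}$, and reduces the inequality to $p_2(1-m_2)\ge 1-m_2$, i.e.\ $m_2\ge 2$. Thus the paper's witness is any $k_2$ large enough to guarantee two distinct draws in the second coordinate. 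Your argument instead settles existence by the degenerate choice $k_2\ge n$ (WRS $\equiv$ RS, hence equality) and then sketches why smaller $k_2$ should yield strict improvement once $x_2^*$ is locked into the incumbent. What your route buys is rigor and honesty: you correctly isolate the obstacle the paper suppresses, namely that whether the retained coordinate equals $x_2^*$ depends on $F$ through the incumbent mechanism, so the factor $\frac{1}{|S_2|-m_2+1}$ is a modelling assumption rather than a derived probability. What the paper's route buys is content: it produces a non-trivial, $n$-independent criterion ($m_2\ge 2$) and thereby argues that WRS genuinely improves on RS rather than merely matching it. If you want to align with the paper, replace your boundary witness by the $m_2\ge 2$ computation, accepting the paper's simplifying model for the retained coordinate; if you want to strengthen the paper, keep your careful treatment of the incumbent and try to prove the per-history domination you flag in your last sentence under an explicit hypothesis on $F$.
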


\begin{proof}
We consider the case of maximizing function $F$, and choose the arguments in the decreasing order of their probabilities of change. Since the value for one dimension always changes, we have $p_1=1,p_2\leq 1$. Having $p_1=1$, the value of $k_1$ can be ignored: the condition at step 3 in Algorithm \ref{alg:wrs_one_step} will be always true for $i=1$.

At each step  $k$, $k\leq{k_2}$, WRS is identical with RS and we have $p_{WRS:k}={p_{RS:k}}$. At step $k+1 > k_2$, RS generates new values for $x^{k+1}_1$ and $x^{k+1}_2$, and computes  $F(x^{k+1}_1,x^{k+1}_2)$. For WRS, $x^{k+1}_1$ is generated with probability one, but $x^{k+1}_2$ is generated with $p_2 \leq 1$. With probability $1-p_2$, the best value known so far for $x_2$ is used, instead of generating a new one. $X^{k+1}$ can be written as:

\begin{equation}
X^{k+1} = \begin{cases} (x^{k+1}_1, x^{k+1}_2), & \mbox{with probability } p_2 \\ (x^{k+1}_1, x^{k}_2), & \mbox{with probability } 1-p_2 \end{cases}
\end{equation}

With probability $p_2$, each step in WRS is identical to the same step in RS, and all points in ${\text{S}_1}\times {\text{S}_2}$ are accessible to WRS. Therefore, RS and WRS have the same search space and both converge probabilistically to the global optimum. 

Ignoring the statistical correlation between the two variables, the probability of RS to hit the optimum after one iteration (the best case) is:

\begin{equation}\label{eq:prs}
p_{RS}=\frac{1}{|S_1|}\frac{1}{|S_2|}
\end{equation}

For WRS, this probability is:  

\begin{equation}\label{eq:pwrs}
p_{WRS}=\frac{1}{|S_1|} \Big(p_2\frac{1}{|S_2|} + (1-p_2)\frac{1}{|S_2|-m_2+1}\Big)
\end{equation}

where $m_2$ is the number of distinct values already generated for $x_2$. 

Using \eqref{eq:prs} and \eqref{eq:pwrs}, \eqref{eq:WRSvsRS} becomes:

\begin{equation}\label{eq:WRSvsRS-explicit}
1- (1 - p_{WRS})^n \geq 1-(1-p_{RS})^n
\end{equation}

which is equivalent to

\begin{equation}\label{eq:WRSvsRS-explicit-onestep}
\frac{1}{|S_1|} \Big(p_2\frac{1}{|S_2|} + (1-p_2)\frac{1}{|S_2|-m_2+1}\Big) \geq \frac{1}{|S_1|} \frac{1}{|S_2|}
\end{equation}

After multiplying both sides by $|S_1|$, \eqref{eq:WRSvsRS-explicit-onestep} can be rewritten as

\begin{equation}\label{eq:WRSvsRS-explicit-onestep-n}
p_2\frac{1}{|S_2|} + (1-p_2)\frac{1}{|S_2|-m_2+1} \geq \frac{1}{|S_2|}
\end{equation}

which reduces to
\begin{equation}\label{eq:WRSvsRS-explicit-onestep-simplified}
p_2(1 -m_2) \geq 1 - m_2
\end{equation}

Because $p_2\leq 1$, \eqref{eq:WRSvsRS-explicit-onestep-simplified} is true for $m_2>1$. Relation \eqref{eq:WRSvsRS} is therefore satisfied if we choose $k_2$ so that at least two distinct values are generated for $x_2$.

\end{proof}

\subsection{Multi-dimensional case}

For the general case of optimizing a function $F:{\text{S}_1}\times {\text{S}_2} \: \dots \: \times {\text{S}_d}\to {\mathbb {R}}$, with $S_i, i=1,\dots,d$ countable sets and under the same assumption that the variables are not statistically correlated, $P_{RS}$  and $P_{WRS}$ are defined as:

\begin{equation}\label{eq:genericPRS}
p_{RS}=\prod_{i=1}^d\frac{1}{|S_i|}
\end{equation}

\begin{equation}\label{eq:genericPWRS}
p_{WRS}=\frac{1}{|S_1|} \prod_{i=2}^d\Big(p_i\frac{1}{|S_i|} + (1-p_i)\frac{1}{|S_i|-m_i+1}\Big)
\end{equation}

where $m_i$ is the number of distinct values already generated for $x_i$.

Following the rationale from Section \ref{bi-dimensional-case}, we have the following theorem:

\begin{Theorem}
For any function $F:{\text{S}_1}\times {\text{S}_2} \: \dots \: \times {\text{S}_d}\to {\mathbb {R}}$ there exist $k_i, i=1,\dots,d$, so that  $p_{WRS:n}\geq{p_{RS:n}}$.
\end{Theorem}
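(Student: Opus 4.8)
The plan is to reduce the $n$-step claim to a single-step comparison and then bound the $(d-1)$-fold product appearing in $p_{WRS}$ factor by factor against the corresponding product in $p_{RS}$. First I would observe, exactly as in the two-dimensional case, that the map $x\mapsto 1-(1-x)^n$ is increasing on $[0,1]$, so the $n$-step inequality $p_{WRS:n}\geq p_{RS:n}$ follows from the single-step inequality $p_{WRS}\geq p_{RS}$, with $p_{WRS}$ and $p_{RS}$ given by \eqref{eq:genericPWRS} and \eqref{eq:genericPRS}. This removes the dependence on $n$ and leaves a purely algebraic inequality between two products.

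Next, factoring the common term $1/|S_1|$ out of both sides, it suffices to show
\begin{equation}
\prod_{i=2}^{d}\Big(p_i\frac{1}{|S_i|}+(1-p_i)\frac{1}{|S_i|-m_i+1}\Big)\geq\prod_{i=2}^{d}\frac{1}{|S_i|}.
\end{equation}
I would establish this term by term. For each fixed $i\in\{2,\dots,d\}$, the same manipulation used to pass from \eqref{eq:WRSvsRS-explicit-onestep-n} to \eqref{eq:WRSvsRS-explicit-onestep-simplified} shows that
\begin{equation}
p_i\frac{1}{|S_i|}+(1-p_i)\frac{1}{|S_i|-m_i+1}\geq\frac{1}{|S_i|}
\end{equation}
holds whenever $m_i\geq 1$ and $p_i\leq 1$: indeed, the inequality rearranges to $(1-p_i)\big(\tfrac{1}{|S_i|-m_i+1}-\tfrac{1}{|S_i|}\big)\geq 0$, which is true because $1-p_i\geq 0$ and $|S_i|-m_i+1\leq|S_i|$. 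Choosing each $k_i$ large enough to guarantee at least one (respectively at least two, for a strict inequality) distinct generated value secures $m_i\geq 1$ for every dimension simultaneously, which is precisely the existence claim of the theorem.

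Finally I would combine the per-factor bounds into the product bound: since every factor on the left is nonnegative and dominates the corresponding strictly positive factor $1/|S_i|$ on the right, the elementary fact that $a_i\geq b_i>0$ for all $i$ implies $\prod_i a_i\geq\prod_i b_i$ yields the desired product inequality, and hence $p_{WRS}\geq p_{RS}$. I expect the main point requiring care to be this last composition step rather than the per-dimension estimate, which is a verbatim copy of the two-dimensional argument: multiplying the inequalities together is only legitimate because all factors are positive, so one must record that $|S_i|-m_i+1\geq 1$, equivalently $m_i\leq|S_i|$, for each $i$ — which is automatic, since $m_i$ counts \emph{distinct} values drawn from $S_i$. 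Beyond keeping track of these positivity conditions uniformly across all $d-1$ factors, I do not anticipate any genuinely difficult step.
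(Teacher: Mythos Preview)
Your proposal is correct and follows essentially the same approach as the paper: reduce the $n$-step inequality to the single-step one via the monotonicity of $x\mapsto 1-(1-x)^n$, then compare the two products factor by factor using positivity of all terms. The only noteworthy difference is that you observe $m_i\geq 1$ already secures the weak inequality (with $m_i\geq 2$ needed for strictness), whereas the paper reduces the per-factor condition to $p_i(1-m_i)\geq 1-m_i$ and records $m_i\geq 2$ as the requirement; your reading is in fact the sharper one, since $m_i=1$ gives equality.
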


\begin{proof}

We consider again the maximization of function $F$. 

Given $k_i, i=1, \dots,d$ the minimum number of values required for each of the dimensions $x_i$ with $k_i\leq k_{i+1}, i=1, \dots,d-1$ and $k\geq k_d$, $X_{k+1}$ is given by:

\begin{equation}
\begin{cases} 
(x^{k+1}_1, x^{k+1}_2,\dots,x^{k+1}_{d-1},x^{k+1}_d),& \mbox{with probability } p_d 
\\ (x^{k+1}_1, x^{k+1}_2,\dots,x^{k+1}_{d-1},x^{k}_d),& \mbox{with probability } p_{d-1}-p_d
\\ \dots
\\ (x^{k+1}_1, x^{k+1}_2,\dots,x^{k}_{d-1},x^{k}_d),& \mbox{with probability }  p_2-p_3
\\ (x^{k+1}_1, x^{k}_2,\dots,x^{k}_{d-1},x^{k}_d), & \mbox{with probability } 1-p_2
\end{cases}
\end{equation}

Starting from \eqref{eq:genericPRS} and  \eqref{eq:genericPWRS}, we can express $P_{WRS:n}$ as:
\begin{equation}\label{eq:genericPWRSatn}
1 - \bigg(1-\frac{1}{|S_1|} \prod_{i=2}^d\Big(p_i\frac{1}{|S_i|} + (1-p_i)\frac{1}{|S_i|-m_i+1}\Big)\bigg)^n
\end{equation}
and $P_{RS:n}$ as:
\begin{equation}\label{eq:genericPRSatn}
1-\Big(1-\prod_{i=1}^d\frac{1}{|S_i|}\Big)^n
\end{equation}

Since all elements of the products from \eqref{eq:genericPWRSatn} and \eqref{eq:genericPRSatn} are positive ($1-p_i\geq 0$, and $m_i$ cannot be greater than $|S_i|$), a sufficient condition to satisfy \eqref{eq:WRSvsRS} is:

\begin{equation}\label{eq:PRSTermvsPWRSTerm}
\Big(\frac{1}{|S_i|} + (1-p_i)\frac{1}{|S_i|-m_i+1}\Big) \geq \frac{1}{|S_i|}
\end{equation}
for each $i\geq2$), which reduces to
\begin{equation}\label{eq:WRSvsRS-explicit-onestep-simplified-n}
p_i(1 -m_i) \geq 1 - m_i
\end{equation}

and, since $p_i\leq 1$, is equivalent with

\begin{equation}\label{eq:convergence_condition}
m_i\geq2, \textrm{for } i=2,\dots,d
\end{equation}

Relation \eqref{eq:WRSvsRS} is satisfied if we choose $k_i$ so that at least two distinct values are generated for each dimension.

\end{proof}

According to these results, for a well chosen set of $k_i,i=1,\dots,d$, at any step $n$, WRS has a greater probability than RS to find the global optimum. Therefore, given the same number of iterations, on average, WRS finds the global optimum faster than RS. In other words, on average, WRS converges faster than RS. 

Moreover, for WRS, the number of generated values for $x_i, i=1,\dots,d$, follows a binomial distribution with probability $p_i$.  After $n$ steps, the expected value for this distribution is $np_i$. Therefore, $m_i$ has, on average, an upper bound of $np_i$. The number of distinct generated values depends on the cardinality of $\text{S}_i$ and the probability distribution used to generate $x_i$.

For example, in the case of the uniform distribution, the expected value for $m_i$ is:

\begin{equation}\label{eq:distinct_values}
E[m_i] = \sum_{1}^{|S_i|}{\left(1-{\left(\frac{|Si|-1}{|Si|}\right)}^{np_i}\right)}
\end{equation}

and $m_i>1$ when $np_i>1$. Hence, for any number of steps $n$, with $n\geq1/p_i$, \eqref{eq:WRSvsRS} is true. By choosing $k_i$ so that $k_i>1/p_i$, \eqref{eq:WRSvsRS} is true for all values of $n$. It can be also observed that the difference between $p_{WRS:n}$ and $p_{RS:n}$ increases with an increasing value of $n$. 

\subsection{Choosing $p_i$ and $k_i$}

Regardless of the distribution used for generating $x_i$, by choosing for $k_i$ (step 6, Algorithm \ref{alg:wrs_global}) a value that can guarantee the generation of at least two distinct samples, \eqref{eq:WRSvsRS} is true and WRS has a higher  probability to find the optimum than RS. 

We decide to sort the function variables depending on their importance (weight) and assign their probabilities $p_i$ accordingly: the smaller the weight of a parameter, the smaller it's probability of change. Therefore, the most important parameter is the one that will always change ($p_1=1$). In order to compute the weight of each parameter, we run RS for a predefined number of steps, $N_0<N$. On the obtained values, we apply fANOVA \cite{HutHooLey14} to estimate the importance of the hyperparameters. If $w_i$ is the weight of the $i$-th parameter and $w_1$ is the weight of the most important one, then $p_i = w_i/w_1, i=1,\dots,d$. 

By assigning higher probabilities of change  to the most important parameters and running RS for $N_0$ steps, we make sure that \eqref{eq:convergence_condition} is satisfied for these parameters. For simplicity, we set $k_i=N_0$ for all parameters, but these values can be adjusted depending on the objective function.

\section{An Example: Griewank Function Optimization} \label{grievank_function_optimization}

To illustrate the concept behind WRS, we consider a simple function with a known analytic form. Since the function is very fast to compute, we can test the performance of our algorithm on a very large number of runs. This will allows us to perform an unpaired t-test on the results and rule out the random factor when assessing its performance.

The Grievank \cite{Griewank81} function is widely used to test the convergence of optimization algorithms. It's analytic form is given by:

\begin{equation}\label{eq:grievank_n}
G_d = 1 + \frac{1}{4000}\sum_{i=1}^dx_i^2-\prod_{i=1}^d\cos{\frac{x_i}{\sqrt{i}}}
\end{equation} 

The function poses a lot of stress on optimization algorithms due to its very large number of local minimums.  We use a slightly modified version of $G_6$, given by:

\begin{equation}\label{eq:grievank_6}
G^*_6 = 1 + \frac{i-1}{4000}\sum_{i=1}^6x_i^2-\prod_{i=1}^6\cos{\frac{x_i}{\sqrt{i}}}
\end{equation}

and maximize $-G^*_6$. The function has a global maximum at $0$, for $x_i=0, i=1,\dots,6$. The term $i-1$ is introduced in order to alter the parameters' importance(weight) which, otherwise, would have been the same across all dimensions. 
We use $S=[-600, 600]$ for all six parameters and run the optimizer for 1000 trials, with an initial RS phase of $1000/e=368$ steps \cite{florea:hal-01821037}. After the first RS phase, we run fANOVA and obtain the weights of the parameters, listed along with their probabilities of change in Table \ref{table:weights-grievank}.

\begin{table}[h!]
\caption{Parameter weights and probabilities for $G_6^*$}
\begin{center}
\begin{tabular}{ |c|c|c|c|c|c|c| }
\hline
Parameter & $x_1$ & $x_2$ & $x_3$ & $x_4$ & $x_5$ & $x_6$ \\
\hline
Weight & 0.07 & 0.18 & 1.24 & 7.77 & 23.52 & 43.96 \\
\hline
Probability & 0.002 & 0.004 & 0.028 & 0.177 & 0.535 & 1.00 \\
\hline
\end{tabular}
\label{table:weights-grievank}
\end{center}
\end{table}

We compare our results against RS, on the same search space, performing 1000 trials on 10000 runs. Table \ref{table:RSvsWRS-grievank} shows the best result achieved by both RS and WRS across all 10000 runs, as well as the average value and the standard deviation of the achieved results across all runs. The standard error for the t-test is $0.176$, $df=19998$ and $\text{P-value}\leq 0.001$.

\begin{table}[h!]
\caption{WRS vs. RS results for $G_6^*$ - values for 1000 runs}
\begin{center}
\begin{tabular}{ |c|c|c|c| }
\hline
Optimizer &  Best Found Value & Average Value & SD \\
\hline
RS & -1.50 & -33.10 & 14.06  \\
\hline
WRS & -\textbf{1.28} & \textbf{-14.58} & \textbf{10.63}  \\
\hline
\end{tabular}
\label{table:RSvsWRS-grievank}
\end{center}
\end{table}

The results obtained by WRS are clearly better than the ones achieved by RS, as also depicted in Fig. \ref{fig:grievank-WRSvsRS}.

Fig. \ref{fig:grievank-WRSConvergence} shows the results obtained for one optimization session with 1000 trials. It can be observed that the algorithm tends to achieve improving results as the number of trials increases.

\begin{figure*}[htb!]
\centerline{\includegraphics[width=0.65\textwidth]{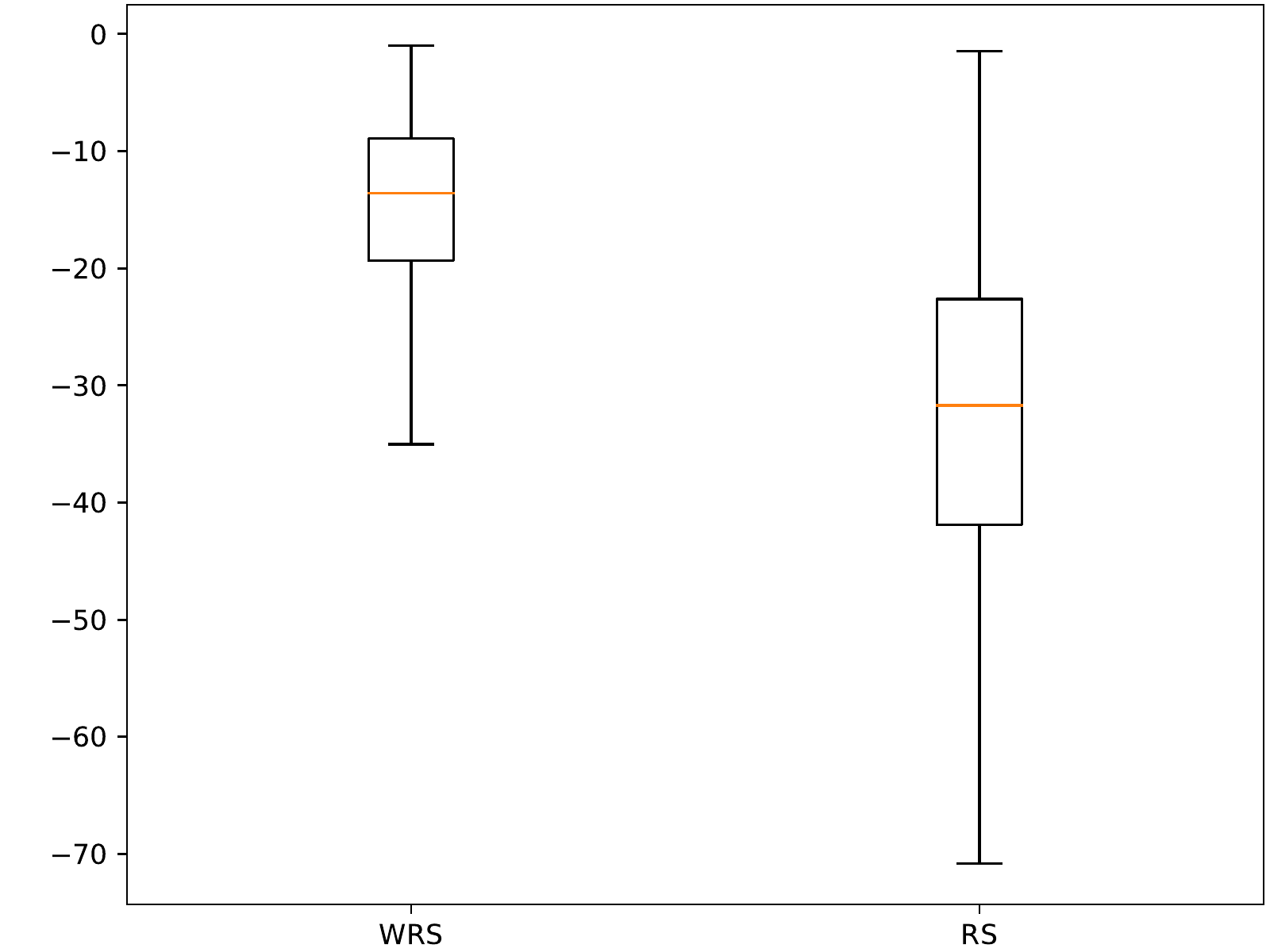}}
\caption{Performance of WRS vs. RS for the $G_6^*$ optimization}
\label{fig:grievank-WRSvsRS}
\end{figure*}

\begin{figure*}[htb!]
\centerline{\includegraphics[width=0.65\textwidth]{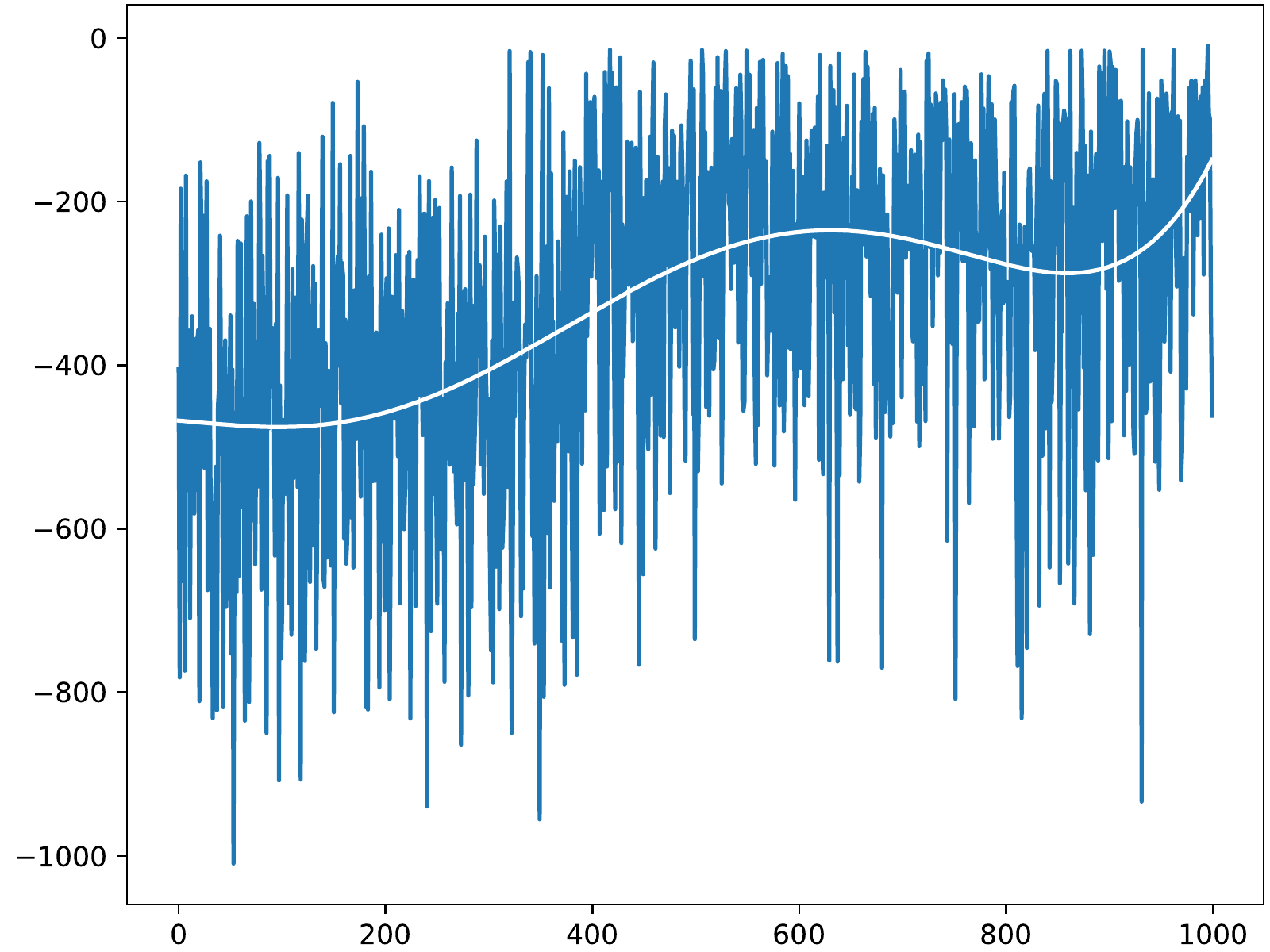}}
\caption{Convergence of WRS for the $G_6^*$ function}
\label{fig:grievank-WRSConvergence}
\end{figure*}

\section{CNN Hyperparameter Optimization} \label{cnn_hyperparameter_optimization}

Our next application of the WRS is for the optimization of a CNN architecture. Currently CNN in one of the best and most used tools for image recognition and machine vision \cite{Patterson:2017:DLP:3169957} and there has been a lot if interest in developing optimal CNN architectures \cite{Krizhevsky:2012:ICD:2999134.2999257, 10.1007/978-3-319-10590-1_53, 43022, He2016DeepRL}. Current CNN architectures are  complex, with a high number of hyperparameters. In addition, the training sets for CNNs are large and this increases training times. Hence, we have a high number of trials, each trail with significant execution time. Decreasing the number of trials is critical. 

When applying WRS to our CNN optimization problem we consider the following hyperparameters:

\begin{itemize}
\item The number of convolution layers - an integer value in the set $\{3, 4, 5, 6\}$;
\item The number of fully-connected layers - an integer value in the set $\{1, 2, 3, 4\}$;
\item The number of output filters in each convolution layer - an integer value in the range $[100, 1024]$;
\item The number of neurons in each fully connected layer - an integer value in the range $[1024, 2048] $.
\end{itemize}

We generate each hyperparameter according to the uniform distribution and assess the performance of the model solely by the classification accuracy.

We use Keras \cite{chollet2015keras} to train and test the CNN for 300 trials - ten epochs each - on the CIFAR-10 \cite{cifar-10} dataset. We run our test on an IBM S822LC cluster with IBM POWER8 nodes, NVLink and NVidia Tesla P100 GPUs\footnote{http://www.cwu.edu/faculty/turing-cwu-supercomputer}. The CIFAR-10 dataset consists of 60000 $32 \times 32$ color images in 10 classes, with 6000 images per class. The data is split into  50000 training images and 10000 test images. We do not use data augmentation. 

The base architecture of the network is represented in Fig. \ref{fig:CNNArchitecture}. The model has between three and six $3 \times 3$ convolutional layers and between one and four fully connected layers. Both the convolutional and fully connected layers use ReLU \cite{Nair:2010:RLU:3104322.3104425} activation and the output layer uses softmax. We add one $2 \times 2$ MAX pooling layer with a dropout [25] of 0.25 for every two convolutional layers and use a dropout of 0.5 for the fully connected layers. We compare the results obtained by our WRS algorithm against the ones obtained by the RS, Nelder-Mead (NM), Particle Swarm (PS) \cite{kennedy95particle} and Sobol Sequences (SS) \cite{SOBOL1976236} implementations provided by Optunity \cite{web:Optunity}.

\begin{figure*}[htb!]
\centerline{\includegraphics[width=0.85\textwidth]{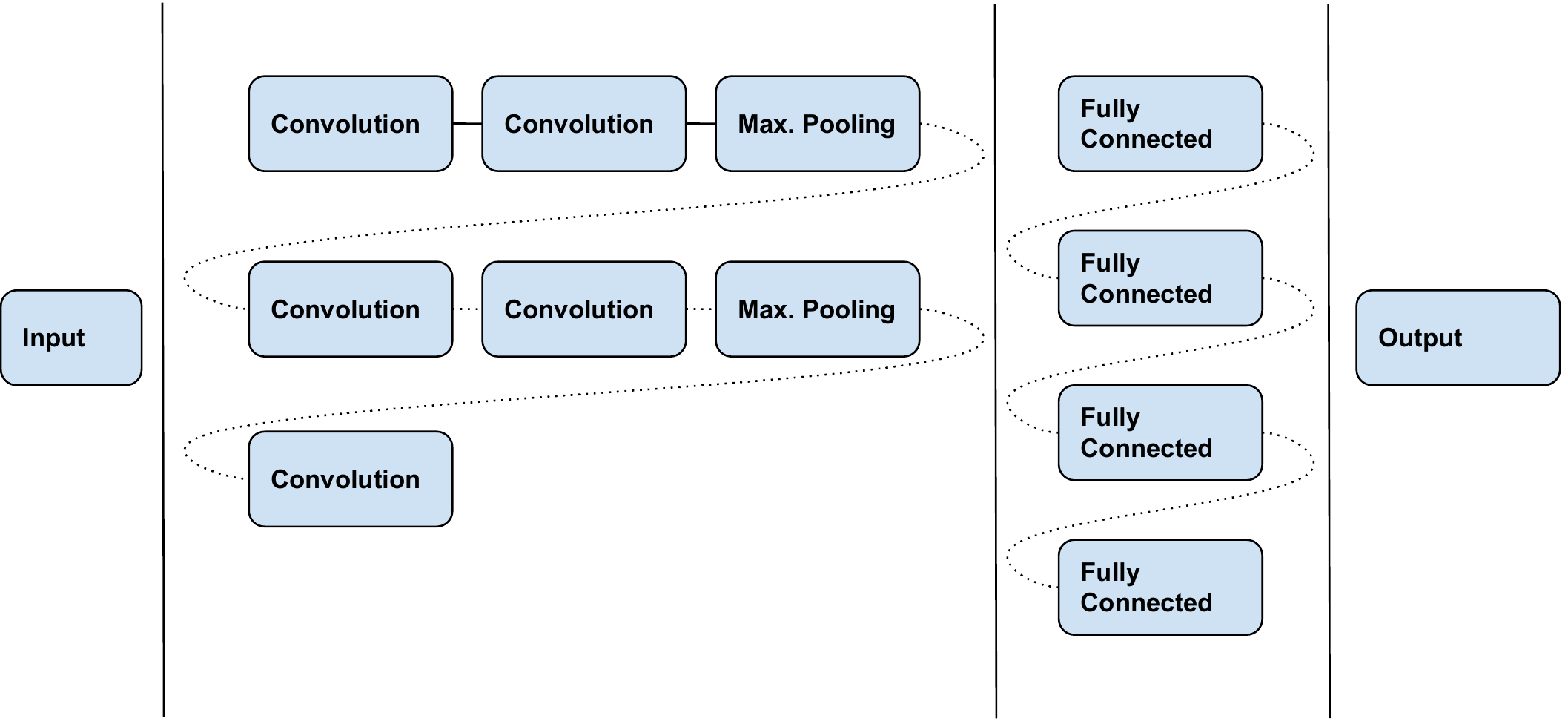}}
\caption{The CNN architecture}
\label{fig:CNNArchitecture}
\end{figure*}

After the first phase of the algorithm, which consists in running RS for $300/e = 110$ trials, we obtain the weights for each parameter. These values, along with the probabilities of change, are listed in Table \ref{table:weights-cnn}. After running fANOVA, the resulted most important three parameters are (in decreasing order of their weights):  the number of neurons in the first fully connected layer, the number of fully connected layers, and the number of convolutional layers. The weights of the other parameters are more than an order of magnitude smaller. Therefore, the second phase of WRS clearly favors the change in the first three most important parameters.

\begin{table}[h!]
\centering
\caption{Parameter weights and probabilities for CNN}
\resizebox{\textwidth}{!}{
\begin{tabular}{ |c|c|c|c|c|c|c|c|c|c|c|c| }
\hline
Convolutional & Fully Connected & & & & & & & & & &   \\
Layers & Layers & Conv 1 & Conv 2 & Conv 3 & Conv 4 & Conv 5 & Conv 6 & Full 1 & Full 2 & Full 3 & Full 4\\
\hline
7.4 & 11.85 & 0.51 & 0.79 & 1.62 & 0.73 & 2.26 & 1.26 & 26.28 & 0.87 & 3.22 & 1.75  \\
\hline
0.28 & 0.45 & 0.02 & 0.03 & 0.06 & 0.03 & 0.09 & 0.05 & 1.00 & 0.03 & 0.12 & 0.07  \\
\hline
\end{tabular}
}
\label{table:weights-cnn}
\end{table}

Fig. \ref{fig:CNN_RSvsWRS} shows the least squares five degree polynomial fit on the accuracy results obtained for each of the 300 trials using:  WRS - the solid line, RS, NM, PS, SS - the dashed lines. The trend of the WRS performance is similar to the one from Fig. \ref{fig:grievank-WRSvsRS}. The plot considers the actual values, reported at each iteration, instead of the local best in order to better reveal the variation of those values.

\begin{figure*}[htb!]
\centerline{\includegraphics[width=0.65\textwidth]{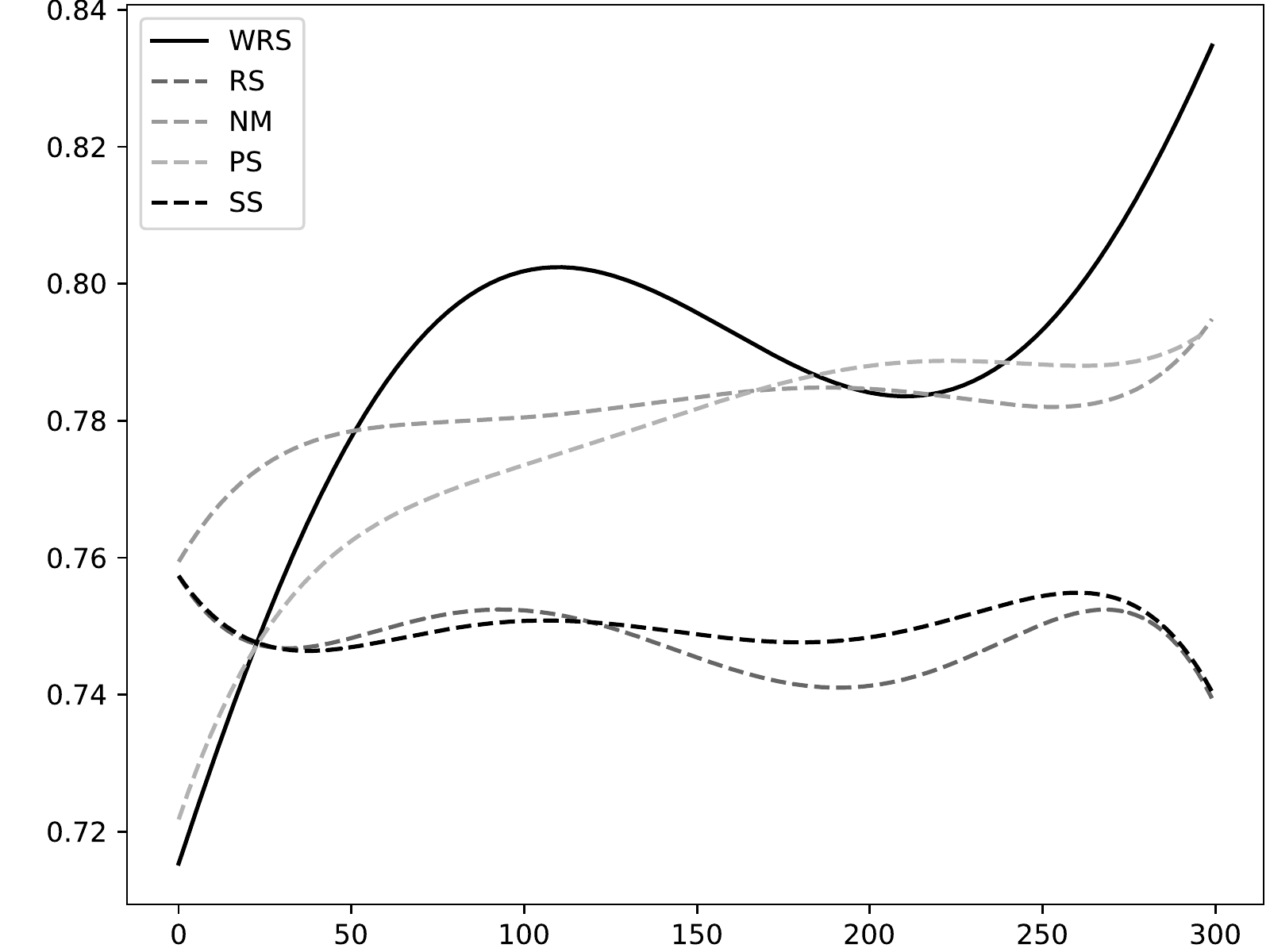}}
\caption{Least squares five degree polynomial fit on RS, NM, PS, SS vs. WRS accuracy for CIFAR-10 on 300 trials. The plot considers the values reported at each iteration}
\label{fig:CNN_RSvsWRS}
\end{figure*}

The best accuracy, as well as the average and standard deviation, across all 300 trials for all algorithms, are depicted in Table \ref{table:RSvsWRS-cnn}. WRS method outperforms all other considered methods (see Table \ref{table:RSvsWRS-cnn} and  Fig.  \ref{fig:CNN-WRSvsRS}). 

\begin{figure*}[htb!]
\centerline{\includegraphics[width=0.65\textwidth]{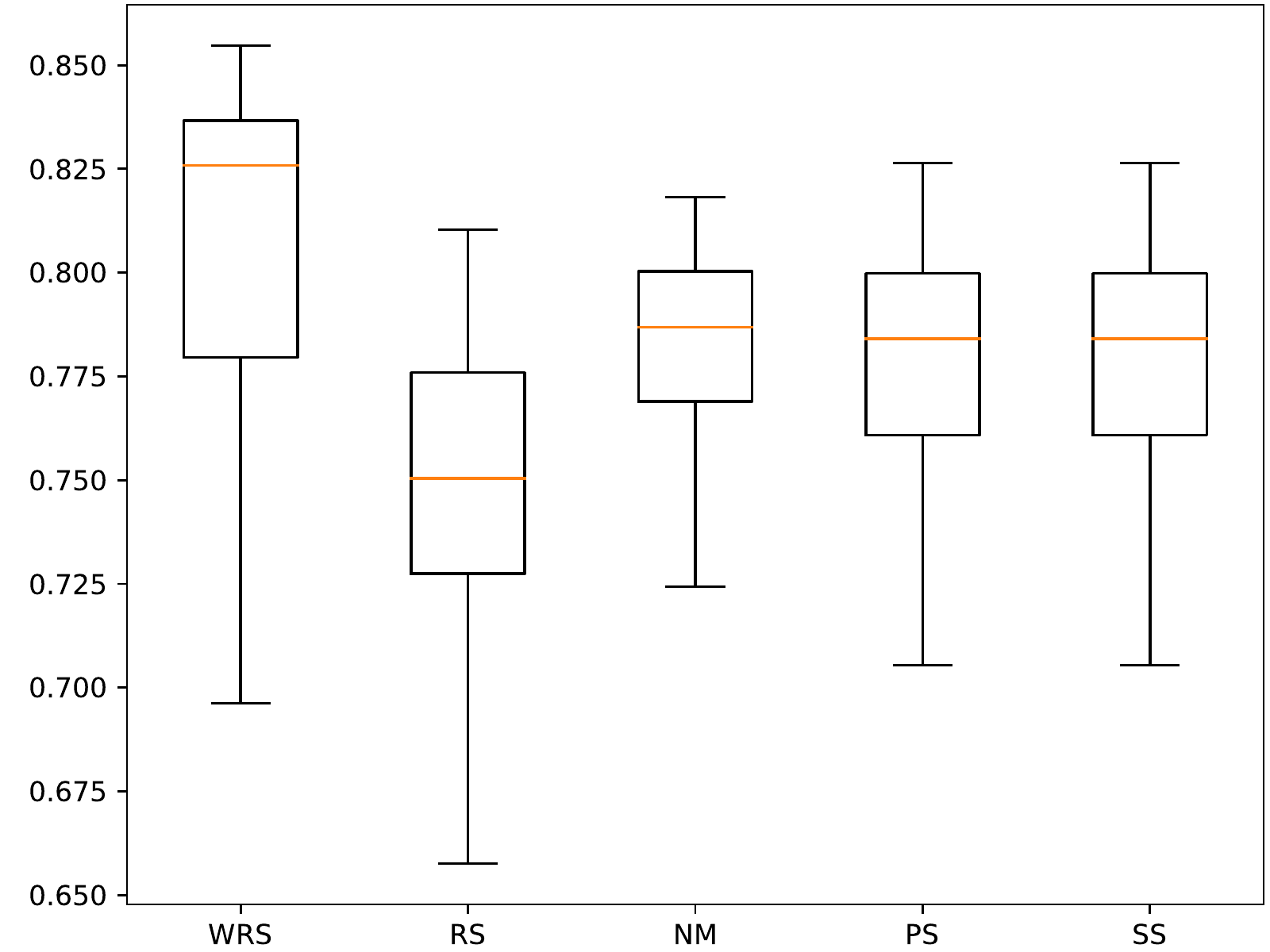}}
\caption{Performance of WRS, RS, NM, PS and SS for CNN optimization}
\label{fig:CNN-WRSvsRS}
\end{figure*}

\begin{table}[h!]
\caption{Algorithms' results for CNN accuracy on CIFAR-10}
\centering
\begin{tabular}{ |c|c|c|c| }
\hline
Optimizer & Best Result & Average & SD \\
\hline
WRS & \textbf{0.85} & \textbf{0.79} & 0.09  \\
\hline
RS & 0.81 & 0.75 & 0.04  \\
\hline
NM & 0.81 & 0.77 & 0.03  \\
\hline
PS & 0.83 & 0.78 & 0.03  \\
\hline
SS & 0.82 & 0.75 & 0.05  \\
\hline
\end{tabular}
\label{table:RSvsWRS-cnn}
\end{table}

Table \ref{table:RSvsWRS-cnn-architecture} shows the best found architecture by each algorithm. We observe that for the WRS and RS methods, the resulted architectures have only one fully connected layer and several convolutional layers (five for RS, six for WRS).

\begin{table}[h!]
\caption{Best identified CNN architectures on CIFAR-10}
\centering
\resizebox{\textwidth}{!}{
\begin{tabular}{ |c|c|c|c|c|c|c|c|c|c|c|c|c| }
\hline
Optimizer & Convolutional & Fully Connected  & & & & & & & & & &  \\
 & Layers & Layers & Conv 1 & Conv 2 & Conv 3 & Conv 4 & Conv 5 & Conv 6 & Full 1 & Full 2 & Full 3 & Full 4\\
\hline
WRS & 6 & 1 & 736 & 508 & 664 & 916 & 186 & 352 & 1229 & - & - & -  \\
\hline
RS & 5 & 1 & 876 & 114 & 892 & 696 & 617 & - & 1828 & - & - & -  \\
\hline
NM & 5 & 3 & 564 & 564 & 564 & 560 & 563 & - & 1529 & 1542 & 1542 & - \\
\hline
PS & 5 & 1 & 479 & 792 & 584 & 411 & 593 & - & 1379 & - & - & -  \\
\hline
SS & 5 & 2 & 402 & 933 & 750 & 997 & 777 & - & 1545 & 1268 & - & -  \\
\hline
\end{tabular}
}
\label{table:RSvsWRS-cnn-architecture}
\end{table}

Table \ref{table:WRS-heatmap} details the results obtained by WRS, showing the accuracy average and the standard deviation values for each combination: (number of fully connected layers, number of convolutional layers). Table \ref{table:WRS-trials} shows the number of trials performed by WRS for each of these combinations. 

We notice that the WRS algorithm favors one of the combinations, namely \{1, 6\}, and uses it for almost two thirds of the number of trials. It is important to mention that within the best 200 trials, only 10 sets of values contain a different combination than \{1, 6\}. This is either \{1, 5\} - seven times, or \{2, 5\} - three times. The first different combination than \{1, 6\} is at the 136-th position. In Table \ref{table:WRS-heatmap}, we observe that this combination also triggers the best results. 

This, together with the fact that WRS performs on average better than RS, validates our hypothesis that the probability that this combination of hyperparameters corresponds to the global optimum is higher than for any other combination. 

\begin{table}[h!]
\caption{WRS Accuracy Average and Standard Deviation. Row headings are numbers of fully connected layers while column headings are numbers of convolutional layers}
\centering
\begin{tabular}{ |c|c|c|c|c|c|c|c|c|c|c|c|c| }
\hline
FC &  &  &  &  \\
/C & 1 & 2 & 3 & 4\\
\hline
3 & 0.74 (0.02)& 0.70 (0.03)& 0.74 (0.01)& 0.69 (0.03) \\
\hline
4 & 0.78 (0.01)& 0.74 (0.03)& 0.74 (0.03)& 0.63 (0.07) \\
\hline
5 & 0.81 (0.02)& 0.80 (0.02)& 0.74 (0.07)& 0.65 (0.06) \\
\hline
6 & \textbf{0.82 (0.01)}& 0.76 (0.04)& 0.72 (0.09)& 0.39 (0.21) \\
\hline
\end{tabular}
\label{table:WRS-heatmap}
\end{table}

\begin{table}[h!]
\caption{WRS Number of Trials. Row headings are numbers of fully connected layers while column headings are numbers of convolutional layers}
\centering
\begin{tabular}{ |c|c|c|c|c|c|c|c|c|c|c|c|c| }
\hline
FC &  &  &  &  \\
/C & 1 & 2 & 3 & 4\\
\hline
3 & 4& 4& 4& 7\\
\hline
4 & 8& 3& 8& 9\\
\hline
5 & 9& 7& 9& 4\\
\hline
6 & \textbf{199}& 6& 10& 9\\
\hline
\end{tabular}
\label{table:WRS-trials}
\end{table}

\section{Conclusions} \label{conclusion}

We have introduced an improved version of RS, the WRS method. Within the same computational budget (i.e., for the same number of iterations), WRS converges on average faster than RS. The WRS algorithm yields better results both for the optimization of a well known difficult mathematical function and for a CNN hyperparameter optimization problem. There is little information required to be transferred between the  consecutive steps of the algorithm, as pointed out in the description of Algorithm \ref{alg:wrs_one_step}. This implies that the WRS  algorithm can be easily implemented in parallel. Since we made no assumptions on the objective function, our results can be generalized  to other optimization problems defined on a discrete domain. We plan to test out algorithm on other classes of optimization problems, in particular on the optimization of various machine learning algorithms. We also plan to compare the results obtained with WRS with other more complicated optimization techniques, especially from the very promising area of Bayesian optimization.

\end{document}